\newtheorem{theorem}{Theorem}
\def\etal{\emph{et al.}}
\definecolor{darkgreen}{RGB}{0,127,0}
\definecolor{darkblue}{RGB}{0,0,127}
\definecolor{darkred}{RGB}{127,0,0}
\definecolor{darkmagenta}{RGB}{127,0,127}
\newif\ifdrafting
\newcommand{\SL} [1] {\textcolor{darkblue}{[Sifei: #1]}} % Sifei
\newcommand{\SG}[1]{\textcolor{darkred}{[Shalini: #1]}} % Shalini
\newcommand{\JG} [1] {\textcolor{darkgreen}{[Jinwei: #1]}} % Jinwei
\newcommand{\JK} [1] {\textcolor{cyan}{[Jan: #1]}} % Jan
\newcommand{\todo} [1] {\textcolor{darkmagenta}{\bf [TODO: #1]}} % TODOs
\long\def\IGNORE#1{} \long\def\COMMENT#1{}
\newcommand{\SL} [1] {}
\newcommand{\SG} [1] {}
\newcommand{\JG} [1] {}
\newcommand{\JK}[1]{}
\newcommand{\todo} [1] {}
\title{Learning Affinity via Spatial Propagation Networks}
\author{
  \hspace{0.15in}Sifei Liu\\
  \hspace{0.15in}
  UC Merced, Nvidia\\
%  Pittsburgh, PA 15213 \\
%  \texttt{hippo@cs.cranberry-lemon.edu} \\
  %% examples of more authors
  \And
  \hspace{0.15in}
  Shalini De Mello\\
  \hspace{0.15in}Nvidia
  \And
  \hspace{0.15in}
  Jinwei Gu \\
  \hspace{0.15in}Nvidia
  \And
  \hspace{0.15in}
  Guangyu Zhong \\
  \hspace{0.15in}Dalian University of Technology\\
  \And
  \hspace{0.15in}
  Ming-Hsuan Yang\\
  \hspace{0.15in}UC Merced, Nvidia\\
  %% Address \\
  %% \texttt{email} \\
  \And
  \hspace{0.15in}
  Jan Kautz\\
  \hspace{0.15in}Nvidia\\
}  
\begin{document}
% \nipsfinalcopy is no longer used

\maketitle

\begin{abstract}
    \vspace{-1em}
	In this paper, we propose spatial propagation networks for learning the affinity matrix 
	for vision tasks.
	We show that by constructing a row/column linear propagation model, the spatially varying transformation matrix exactly constitutes an affinity matrix that models dense, global pairwise relationships of an image.
	Specifically, we develop a three-way connection for the linear propagation model, which (a) formulates a sparse transformation matrix, where all elements can be the output from a deep CNN, but (b) results in a dense affinity matrix that effectively models any task-specific pairwise similarity matrix.
	Instead of designing the similarity kernels according to image features of two points, we can directly output all the similarities in a purely data-driven manner.
	The spatial propagation network is a generic framework that can be applied to many affinity-related tasks, including but not limited to image matting, segmentation and colorization, to name a few.
	Essentially, the model can learn semantically-aware affinity values for high-level vision tasks due to the powerful learning capability of the deep neural network classifier.
	We validate the framework on the task of refinement for image segmentation boundaries.
	Experiments on the HELEN face parsing and PASCAL VOC-2012 semantic segmentation tasks show that the spatial propagation network provides a general, effective and efficient solution for generating high-quality segmentation results.
    \vspace{-1em}
\end{abstract}

\section{Introduction}
\label{sec:intro}
% background

An affinity matrix is a generic matrix that determines how close, or similar, two points are in a space.
In computer vision tasks, it is a weighted graph that regards each pixel as a node, and connects each pair of pixels by an edge~\cite{normcut,levin2008closed,levin2004colorization,he2013guided,bilateral}. 
The weight on that edge should reflect the pairwise similarity with respect to different tasks.
For example, for low-level vision tasks such as image filtering, the affinity values should reveal the low-level coherence of color and texture~\cite{bilateral,wls,he2013guided,he2011single}; for mid to high-level vision tasks such as image matting and segmentation~\cite{levin2008closed,MaireNY15}, the affinity measure should reveal the semantic-level pairwise similarities.
Most techniques explicitly or implicitly assume a measurement or a similarity structure over the space of configurations.
The success of such algorithms depends heavily on the assumptions made to construct these affinity matrices, which are generally not treated as part of the learning problem.

% Our work
In this paper, we show that the problem of learning the affinity matrix can be equivalently expressed as learning a group of small row/column-wise, spatially varying linear transformation matrices.
Since a linear transformation can be easily implemented as a differentiable module in a deep neural network, the transformation matrix can be learned in a purely data-driven manner as opposed to being constructed by hand. 
Specifically, we adopt an independent deep CNN to output all entities of the matrix with the input of the original RGB images, such that the affinity is learned from a deep model 
conditioned on the specific inputs.
We show that using a three-way connection, instead of the fully connection, is sufficient for learning a dense affinity matrix and requires much fewer output channels of a deep CNN.
Therefore, instead of using designed features and kernel tricks, our network outputs all entities of the affinity matrix in a data-driven manner.

% advantage
The advantages of learning affinity matrix in a data-driven manner are multifold. 
First, the assumption of a similarity matrix based on distance metric in certain space 
(e.g., RGB or Euclidean~\cite{he2013guided,normcut,ChenPK0Y16,crfasrnn_iccv2015,krahenbuhl2011efficient}) may not describe the pairwise relations in mid-to-high-level feature space.
To apply such designed pairwise kernels to tasks such as semantic segmentation, multiple iterations is required~\cite{krahenbuhl2011efficient,ChenPK0Y16,crfasrnn_iccv2015} for a satisfactory performance.
In contrast, the proposed method learns and outputs all entities of an affinity matrix under direct supervision of ultimate loss functions, 
where no iteration, specific design or assumption of kernel function is needed. 
Second, we can learn the high-level affinity measure initializing from hierarchical deep features from  the VGG~\cite{Simonyan14c} and ResNet~\cite{HeZRS15} where conventional 
metrics and kernels may not be applied.
Due to the above properties, especially the first one, the framework is far more efficient than the related graphical models, such as Dense CRF.

Our proposed architecture, namely spatial propagation network (SPN), contains a deep CNN 
that learns the affinity entities and a spatial linear propagation module.
Images or general 2D matrix are fed into the module,  
and propagated under the guidance of the learned affinity.   
All modules are differentiable and jointly trained using stochastic gradient descent (SGD) method. 
The spatial linear propagation module is computationally efficient for inference due to the linear time complexity of the recurrent architecture.

\section{Related Work}
\label{sec:rw}
Numerous methods explicitly design affinity matrices for image filtering~\cite{bilateral,he2013guided}, colorization~\cite{levin2004colorization}, matting~\cite{levin2008closed} and image segmentation~\cite{krahenbuhl2011efficient} based on the physical nature of the problem.
Other methods, such as total variation (TV)~\cite{rudin1992nonlinear} and learning to diffuse~\cite{liu2016learningDiffuse} improve the modeling of pairwise relationships by utilizing different objectives, or incorporating more priors into diffusion partial differential equations (PDEs).
However, due to the lack of an effective learning strategy, it is still challenging to produce a learning based affinity for complex visual analysis problems.
Recently, Maire et al.~\cite{MaireNY15} trained a deep ConvNet to directly predict the entities of an affinity matrix, which demonstrated good performance on image segmentation.
However, since the affinity is followed by a solver of spectral embedding as an independent part, it is not directly supervised for the classification/prediction task.
Bertasius et al.~\cite{bertasius2016convolutional} introduced a random walk network that optimizes the objectives of pixel-wise affinity for semantic segmentation.
Differently, the affinity matrix is additionally supervised by a ground-truth sparse pixel similarities, which limits the potential connections between pixels. 

On the other hand, many graphical model-based methods have successfully improved the performance of image segmentation. In the deep learning framework, the conditional random fields (CRFs) with efficient mean field inference is frequently used ~\cite{krahenbuhl2011efficient,crfasrnn_iccv2015,Deep-MassageParsing-nips15,ChenPK0Y16,schwing2015fully,arnab2016higher} to model the pairwise relations in the semantic labeling space.
Some of them use it as a post-processing module~\cite{ChenPK0Y16}, while others integrate it as a jointly-trained part~\cite{crfasrnn_iccv2015,Deep-MassageParsing-nips15,schwing2015fully,arnab2016higher}.
While both dense CRFs and the proposed method describe the densely connected pairwise relationships, dense CRFs
relies on designed kernels, while ours utilizes propagation structure with directly learned pairwise links, which is much less explored before, but more suitable as an embedded deep-learning module.
Since in this paper, SPN is trained as a universal segmentation refinement module, we specifically compare it with one of the methods~\cite{ChenPK0Y16} that relies on the dense CRF~\cite{krahenbuhl2011efficient} as a post-processing strategy.
Our architecture is also related to the multi-dimensional RNN or LSTM~\cite{pixelRNN,byeon2015scene,mdrnn2007}. However, both the standard RNN and LSTM contain multiple non-linear units and thus do not fit into the proposed affinity framework.

\section{Proposed Approach}\label{sec:app}
In this work, we construct a spatial propagation network (SPN) that can transform a two-dimensional (2D) map (\textit{e.g.}, the result of coarse image segmentation) to a new one with desired properties (\textit{e.g.}, a new segmentation map with significantly refined details).
With spatially varying parameters that supports the propagation process, we show theoretically in Section~\ref{sec:pde} that this module is equivalent to the standard anisotropic diffusion process~\cite{weickert1998anisotropic,liu2016learningDiffuse}. 
As proved, the transformation of maps is controlled by a Laplacian matrix that is constituted by the parameters of the spatial propagation module.
Since the propagation module is differentiable, those parameters can be learned by any type of neural network (\textit{e.g.}, a typical deep CNN) that is connected to this module through joint training.
We introduce the propagation network in Section~\ref{sec:rnn}, and specifically analyze the properties of different types of connections within the framework for learning the affinity matrix.

\vspace{-2mm}
\subsection{Linear Propagation as Spatial Diffusion}
\label{sec:pde}
\vspace{-1mm}
We apply a linear transformation by means of the spatial propagation
network, where a matrix is scanned row/column-wise in four fixed
directions: left-to-right, top-to-bottom, and verse-vise. This strategy
is used widely in~\cite{mdrnn2007,pixelRNN,liu2016learningRecursive,ChenBPMY15}.
We take the left-to-right direction as an example for the following discussion. Other directions are processed independently in the same manner.

We denote $X$ and $H$ as two 2D maps of size $n\times n$, with exactly the same dimensions as the matrix before and after spatial propagation, where $x_t$ and $h_t$, respectively, represent their $t^{th}$ columns with $n\times1$ elements each.
We linearly propagate information from left-to-right between adjacent columns using an $n\times n$ linear transform matrix 
$w_t$ as: 
\begin{equation}
h_{t} = \left( I-d_t\right) x_t + w_t h_{t-1},\hspace{1em}t\in [2,n]
\label{eq:col-trans}
\end{equation}
where $I$ is the $n\times n$ identity matrix, the initial condition
$h_1=x_1$, and $d_t(i,i)$ is a diagonal matrix, where the $i^{th}$ element is the sum of all the elements of the $i^{th}$ row of $w_t$:
\begin{equation}
d_t(i,i) = \sum_{j=1,j\neq i}^{n}w_t(i,j).
\label{eq:degree}
\end{equation}

As shown, the matrix $H$, where $\left\lbrace h_{t}\in H, t\in [1,n]\right\rbrace $, is updated in a column-wise manner recursively.
For each column, $h_{t}$ is a linear, weighted combination of 
the previous column $h_{t-1}$, and the corresponding column $x_t$ in $X$.

When the recursive scanning is finished, the updated 2D matrix $H$ can be expressed with an expanded formulation of Eq.~\eqref{eq:col-trans}:
\begin{small}
	\begin{equation}
	H_v =
	\left[
	\begin{matrix}
	I     & 0    & \cdots & \cdots & 0    \\
	w_2    & \lambda_2  & 0 & \cdots & \cdots   \\
	w_3w_2     & w_3\lambda_2    & \lambda_3  &  0  & \cdots 		\\
	\vdots &   \vdots   & \vdots & \ddots & \vdots    \\
	\vdots &   \vdots   & \cdots & \cdots    & \lambda_n    \\
	\end{matrix}
	\right]X_v	\\
	=GX_v,
	\label{eq:global}
	\end{equation}
\end{small}
where $G$ is a lower triangular, $N\times N (N=n^2)$ transformation matrix, which relates $X$ and $H$.
$H_v$ and $X_v$ are vectorized versions of $X$ and $H$, respectively, with the dimension of $N\times 1$. Specifically, they are created by concatenating $h_t$ and $x_t$ along the same, single dimension,
\textit{i.e.}, $H_v=\left[ h_1^T,...,h_n^T\right]^T $ and $X_v=\left[ x_1^T,...,x_n^T\right]^T$.
All the parameters $\left\lbrace \lambda_t,w_t, d_t, I \right\rbrace,t\in[2,n]$ are $n\times n$ sub-matrices, where $\lambda_t=I-d_t$.
%
%We have $h_1=x_1$ as an initial condition.

In the following section, we validate that Eq. \eqref{eq:global} can be expressed as a spatial anisotropic diffusion process, with the corresponding propagation affinity matrix constituted by all $w_t, t\in\left[2, n\right] $.

\begin{theorem}
	The summation of elements in each row of $G$ equals to one.
	\label{thero1}
\end{theorem}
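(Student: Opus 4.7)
The cleanest approach is to recognize that the row-sum condition on $G$ is equivalent to the statement $G \mathbf{1}_N = \mathbf{1}_N$, i.e., the all-ones vector is a right eigenvector of $G$ with eigenvalue $1$. In view of the identity $H_v = G X_v$ from Eq.~\eqref{eq:global}, this in turn is equivalent to the following dynamical statement: if we feed in $X$ equal to the all-ones $n\times n$ matrix (so that $x_t = \mathbf{1}_n$ for every $t$), then the propagation recursion must output $h_t = \mathbf{1}_n$ for every $t$. So my plan is to reduce the claim to this invariance property and establish it by induction on the column index $t$ using the recursion in Eq.~\eqref{eq:col-trans}, rather than trying to argue directly about the explicit lower-triangular block entries of $G$ (which would require summing products like $w_t w_{t-1} \cdots w_2$ and $w_t \cdots w_{s+1}\lambda_s$ across the row and showing intricate cancellations).

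The base case is immediate from the initial condition $h_1 = x_1 = \mathbf{1}_n$. For the inductive step, assume $h_{t-1} = \mathbf{1}_n$ and evaluate
\begin{equation*}
h_t \;=\; (I - d_t)\,\mathbf{1}_n \;+\; w_t\,\mathbf{1}_n .
\end{equation*}
The $i$-th entry of $w_t\,\mathbf{1}_n$ is the $i$-th row sum of $w_t$, and by the definition of $d_t$ in Eq.~\eqref{eq:degree} this is exactly $d_t(i,i)$ (up to the treatment of the diagonal entry of $w_t$, which I address below). The $i$-th entry of $(I - d_t)\mathbf{1}_n$ is $1 - d_t(i,i)$, so the two contributions sum to $1$, giving $h_t(i) = 1$ for every $i$. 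Hence $h_t = \mathbf{1}_n$, and the induction closes.

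Concatenating over $t$ gives $H_v = \mathbf{1}_N = X_v$, whence $G \mathbf{1}_N = \mathbf{1}_N$, which is precisely the claim that every row of $G$ sums to one.

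The only subtle point, and the main thing to verify carefully, is the bookkeeping between $d_t(i,i)$ and the row sum of $w_t$: Eq.~\eqref{eq:degree} writes $d_t(i,i) = \sum_{j \ne i} w_t(i,j)$, which matches $\sum_j w_t(i,j)$ exactly when the diagonal entry $w_t(i,i)$ is zero, as is the case under the three-way connection structure described in the paper (each output element of $h_t$ depends on only three neighbors in $h_{t-1}$, none of which is counted as a ``self'' weight of the form $w_t(i,i)$). Under that convention the cancellation above is exact; without it one picks up a spurious $w_t(i,i)$ per entry, so flagging this convention is essentially the whole content of the argument beyond the one-line induction.
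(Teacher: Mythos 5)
Your proof is correct and reaches the same algebraic cancellation as the paper, but it is organized along a genuinely different axis. The paper fixes a block-row $G_t$ of $G$, writes out its constituent blocks $G_{tk}=\prod_{\tau=k+1}^{t}w_\tau\lambda_k$ explicitly, and runs an induction on the partial sums $\sum_{k=1}^{m}G_{tk}E$ over the block index $m$, showing they telescope to $\prod_{\tau=m+1}^{t}w_\tau E$ before the final diagonal block closes the sum to $E$. You instead observe that the row sums of $G$ are exactly the output of the recursion \eqref{eq:col-trans} applied to the all-ones input, and induct on the column index $t$ to show $h_t=\mathbf{1}_n$ is invariant. This buys you a cleaner argument: you never need the explicit product formula for $G_{tk}$ or the lower-triangularity bookkeeping, and the entire content reduces to the single identity $(I-d_t)\mathbf{1}+w_t\mathbf{1}=\mathbf{1}$, i.e.\ $w_t\mathbf{1}=d_t\mathbf{1}$ --- which is precisely the step $\left(w_{n+1}+I-d_{n+1}\right)E=E$ that the paper also invokes inside its induction. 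The two proofs are therefore the same computation viewed ``dynamically'' (yours) versus ``statically'' on the assembled matrix (the paper's); yours is arguably the more transparent of the two.

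One correction to your closing remark: under the three-way connection, the ``middle'' neighbor of pixel $(k,t)$ is $(k,t-1)$, so its weight sits exactly on the diagonal $w_t(k,k)$, which is generally nonzero --- your claim that no self-indexed weight appears is not right. The actual resolution of the bookkeeping issue you flag is that the implemented update \eqref{eq:2rnn-spatial} subtracts \emph{all} weights in $\mathbb{N}$, including the middle one, so the operative $d_t$ is the full row sum of $w_t$ rather than the off-diagonal sum literally written in \eqref{eq:degree}. With that (intended) reading of $d_t$, your cancellation is exact with no condition on $w_t(i,i)$; note that the paper's own proof silently relies on the same convention, so this is a wrinkle in the paper's notation rather than a defect of your argument.
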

Since G contains $n \times n$ sub-matrices, each representing the transformation between the corresponding columns of $H$ and $X$,
we denote all the weights used to compute $h_t$ as the $t^{th}$ block-row $G_t$. On setting $\lambda_1 = I$, the $k^{th}$ constituent $n \times n$ sub-matrix of $G_t$ is:
\begin{small}
	\begin{equation}
	G_{tk}=\left\{
	\begin{aligned}
	\prod_{\tau=k+1}^{t}&w_\tau\lambda_k, &k\in[1,t-1] \\
	&\lambda_k,\qquad &k=t 
	\end{aligned}
	\right.
	\end{equation}
\end{small}
To prove that the summation of any row in $G$ equals to one, we instead prove that for $\forall t\in[1,n]$, each row of $G_t$ has the summation of one.
\begin{proof}
	Denoting $E=\left[ 1,1,...,1\right]^T$ as an $n\times 1$ vector,
	we need to prove that $G_t\left[1,...,1\right]_{N\times1}^T = E $.
	Equivalently $\sum_{k=1}^{t}G_{tk}E=E$, because $G$ is a lower triangular matrix. 
	In the following part, we first prove that when $m\in\left[1,t-1\right] $, we have $\sum_{k=1}^{m}G_{tk}E=\prod_{\tau=m+1}^{t}w_tE$ by mathematical induction .
	
    \vspace{-.5em} 	
	{\flushleft \bf Initial step.} When $m=1$, $\sum_{k=1}^{m}G_{tk}E=G_{t1}E=\prod_{\tau=2}^{t}w_{\tau}E$, which satisfies the assertion.
    \vspace{-.5em} 	
	{\flushleft \bf Inductive step.} Assume there is a $n\in\left[1,t-1\right]$, such that $\sum_{k=1}^{n}G_{tk}E=\prod_{\tau=n+1}^{t}w_tE$, we must prove the formula is true for $n+1\in\left[1,t-1\right]$.
	\begin{small}
		\begin{equation}
		\begin{aligned}
			\sum_{k=1}^{n+1}G_{tk}E &= \sum_{k=1}^{n}G_{tk}E+G_{t\left( n+1\right)}E \\
			&= \prod_{\tau=n+1}^{t}w_{\tau}E+\prod_{\tau=n+2}^{t}w_{\tau} \\
			&= \prod_{\tau=n+2}^{t}w_{\tau}\left[\left(w_{n+1}+I-d_{n+1} \right)E  \right]. 
		\end{aligned}
		\label{eq:proof1}
		\end{equation}
	\end{small}
\noindent According to the formulation of the diagonal matrix in Eq. \eqref{eq:degree} we have $\sum_{k=1}^{n+1}G_{tk}E = \prod_{\tau=n+2}^{t}w_{\tau}E$. Therefore, the assertion is satisfied.  When $m=t$, we have:
	\begin{small}
	\begin{equation}
%\sum_{k=1}^{t}G_{tk}E = \sum_{k=1}^{t-1}G_{tk}E + G_{tt}E  = \prod_{\tau=t}^{t}w_{\tau}E + \lambda_t E  = w_{\tau}E + \left( I - d_t\right)E = E, 
	\begin{aligned}
		\sum_{k=1}^{t}G_{tk}E & = \sum_{k=1}^{t-1}G_{tk}E + G_{tt}E  \\
		&= \prod_{\tau=t}^{t}w_{\tau}E + \lambda_t E \\
		& = w_{\tau}E + \left( I - d_t\right)E = E, 
	\end{aligned}
	\label{eq:proof2}
	\end{equation}
	\end{small}
\noindent which yields the equivalence of Theorem~\ref{thero1}.
\end{proof}

\begin{theorem}
	We define the evolution of a 2D matrix as a time sequence $\left\lbrace U\right\rbrace_T $ , where $U(T=1)=U_1$ is the initial state.
	When the transformation between any two adjacent states follows Eq.~\eqref{eq:global}, the sequence is a diffusion process expressed with a partial differential equation (PDE):
	\begin{equation}
	\partial_TU = -LU
	\label{eq:pde}
	\end{equation}
	where $L=D-A$ is the Laplacian matrix, $D$ is the degree matrix composed of $d_t$ in Eq.~\eqref{eq:degree}, and $A$ is the affinity matrix composed by the off-diagonal elements of $G$. 
	\label{thero2}
\end{theorem}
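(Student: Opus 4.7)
The plan is to interpret the global recursion $H_v = G X_v$ from Eq.~\eqref{eq:global} as a single discrete time step of an evolution $\{U\}_T$, so that $U_{T+1} = G\,U_T$. Writing this as a forward Euler difference, $U_{T+1} - U_T = (G - I)\,U_T$, and taking the continuous-time limit gives $\partial_T U = (G-I)\,U$. The theorem then reduces to the purely algebraic identity $G - I = -L$, i.e.\ $I - G = D - A$, for $D$ the degree matrix and $A$ the off-diagonal part of $G$.

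To establish $I - G = D - A$, I would inspect the block structure of $G$ in Eq.~\eqref{eq:global} entrywise rather than blockwise. The block-diagonal of $G$ consists of the sub-matrices $\lambda_t = I - d_t$; because $d_t$ is diagonal by Eq.~\eqref{eq:degree}, each $\lambda_t$ is itself diagonal, so the entrywise main diagonal of $G$ is exactly the collection $\{1 - d_t(i,i)\}_{t,i}$. Consequently, the entrywise diagonal of $I - G$ is $\{d_t(i,i)\}_{t,i}$, which is by definition the degree matrix $D$. The entrywise off-diagonal of $G$ is, by the statement of the theorem, the affinity matrix $A$, so the off-diagonal of $I - G$ is $-A$. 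Combining these two observations yields $I - G = D - A = L$, hence $G - I = -L$, and substituting back into the Euler-limit expression gives the claimed PDE $\partial_T U = -L U$.

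As a sanity check I would appeal to Theorem~\ref{thero1}: its conclusion $G\mathbf{1} = \mathbf{1}$ is equivalent to $L\mathbf{1} = 0$, which is the defining null-space property of a graph Laplacian, so the matrix $L$ produced here is genuinely a Laplacian (of a directed, possibly non-symmetric graph, which is fine for this construction). This also reassures us that the diffusion equation preserves the constant function, as it should.

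The main obstacle is not computational; it is conceptual bookkeeping. One must be careful that the ``off-diagonal elements of $G$'' in the statement of the theorem refers to entrywise off-diagonal positions in the full $N\times N$ matrix, not merely to off-diagonal blocks, and that every entrywise diagonal contribution comes from the block-diagonal $\lambda_t$'s because each $\lambda_t$ happens to be diagonal. Once this bookkeeping is clear, the identification $G = (I - D) + A$ and the passage to the continuous-time limit are immediate.
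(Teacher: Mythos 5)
Your proposal is correct and follows essentially the same route as the paper: both identify $H_v = GX_v$ with one discrete time step, decompose $G = (I-D) + A$ using the fact that the diagonal blocks $\lambda_t = I - d_t$ supply exactly the entrywise diagonal, and read off $U_{T+1}-U_T = -LU_T$ as the discretization of the PDE. Your explicit entrywise-versus-blockwise bookkeeping and the observation that Theorem~\ref{thero1} is equivalent to $L\mathbf{1}=0$ are slightly more careful than the paper's own write-up, but they do not constitute a different argument.
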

\begin{proof}
	We substitute the $X$ and $H$ as two consecutive matrices $U_{T+1}$ and
	$U_T$ in \eqref{eq:global}. 
	According to Theorem~\ref{thero1}, we ensure that the sum of each row $I-G$ is $0$ that can formulate a standard Laplacian matrix. 
	Since $G$ has the diagonal sub-matrix $I-d_t$, we can rewrite \eqref{eq:global} as:
	\begin{equation}
	\begin{aligned}
	U_{T+1} & = \left( I-D+A\right)U_T \\
	& =  \left( I-L\right)U_T
	\end{aligned}
	\label{eq:proof3}
	\end{equation}
	where $G = \left( I-D+A\right)$, $D$ is an $N\times N$ diagonal matrix containing all the $d_t$ and $A$ is the off-diagonal part of $G$.
	It then yields $U_{T+1}-U_{T} = -LU_{T}$, a discrete formulation of \eqref{eq:pde} with the time discretization interval as one.
\end{proof}

Theorem \ref{thero2} shows the essential property of the row/column-wise linear propagation in Eq.~\eqref{eq:col-trans}: it is a standard diffusion process where $L$ defines the spatial propagation and $A$, the affinity matrix, describes the similarities between any two points.
Therefore, learning the image affinity matrix $A$ in Eq.~\eqref{eq:proof3} is \textbf{equivalent} to
learning a group of transformation matrices $w_t$ in Eq.~\eqref{eq:col-trans}.

In the following section, we show how to build the spatial propagation~\eqref{eq:col-trans} as
a differentiable module that can be inserted into a standard feed-forward
neural network, so that the affinity matrix $A$ can be learned in a data-driven manner.

\begin{figure}
	\centering
	{
		\includegraphics[width = 0.8\textwidth]{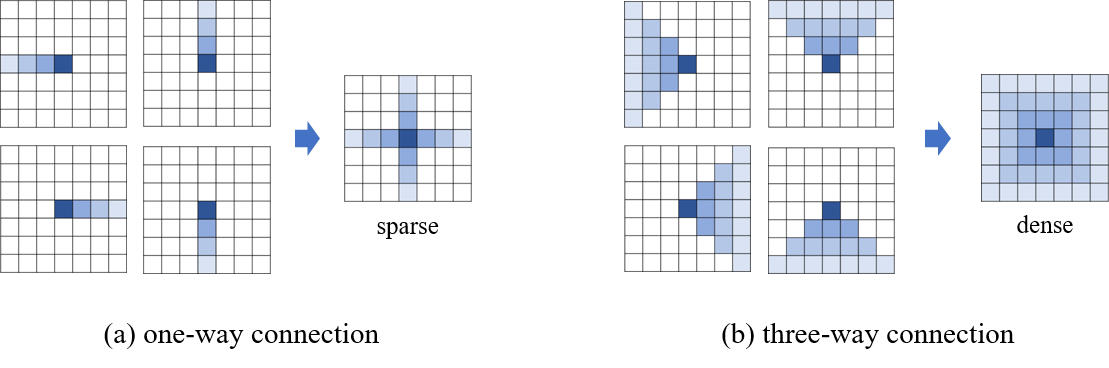}\vspace{-2mm}
	}
	\caption{
		\footnotesize 
		Different propagation ranges for
		(a) one-way connections; and 
		(b) three-way connections.
		Each pixel (node) receives information from a single line with one-way connection, and from a 2 dimensional plane with three-way connection. 
		Integration of four directions w.r.t. (a) results in global, but sparsely connected pairwise relations, while (b) formulates global and densely connected pairwise relations.
	}
	\label{fig:connect}
\end{figure}

\subsection{Learning Data-Driven Affinity }
\label{sec:rnn}
Since the spatial propagation in Eq.\eqref{eq:col-trans} is differentiable,
the transformation matrix can be easily configured as a row/column-wise fully-connected layer.
However, we note that since the affinity matrix indicates the pairwise similarities of a specific input, it should also be conditioned on the content of this input (\textit{i.e.}, different input images should have different affinity matrices).
Instead of setting the $w_t$ matrices as fixed parameters of the module, we design them as the outputs of a deep CNN, which can be directly conditioned on an input image. 

One simple way is to set the output of the deep CNN to use the same size as the input matrix.
When the input has $c$ channels (\textit{e.g.}, an RGB image has $c=3$),
the output needs $n\times c \times 4$ channels (there are $n$ connections from the previous row/column per pixel per channel, and with four different directions).
Obviously, this is too many (\textit{e.g.}, an $128\times 128 \times 16$ feature map needs an output of $128\times 128 \times 8192$) to be implemented in a real-world system.
Instead of using full connections between the adjacent rows/columns, we show that certain local connections, corresponding to a sparse row/column-wise transform matrix, can also formulate a densely connected affinity.
Specifically, we introduce the (a) one-way connection and the (b) three-way connection as two different ways to implement Eq.~\eqref{eq:col-trans}.

{\flushleft \bf One-way connection.}
The one-way connection enables every pixel to connect to only one pixel from the previous row/column (see Figure~\ref{fig:connect}(a)). It is equivalent to an one dimensional (1D), linear recurrent propagation that scans each row/column independently as an 1D sequence.
Following Eq.~\eqref{eq:col-trans}, we denote $x_{k,t}$ and $h_{k,t}$ as the $k^{th}$ pixels in the $t^{th}$ column, where the left-to-right propagation for one-way connection is:
\begin{equation}
h_{k,t} = \left( 1-p_{k,t}\right) \cdot x_{k,t} + p_{k,t}\cdot h_{k,t-1},
\label{eq:lrnn-spatial}
\end{equation}
where $p$ is a scaler weight indicating the propagation strength between the pixels at $\left\lbrace k,t-1\right\rbrace $ and $\left\lbrace k,t\right\rbrace $.
Equivalently, $w_t$ in Eq. \eqref{eq:col-trans} is a diagonal matrix, with the elements constituted by $p_{k,t}, k\in\left[1,n\right] $.

The one-way connection is a direct extension of sequential recurrent propagation~\cite{mdrnn2007,visin2015renet,KalchbrennerDG15}.
The exact formulation of Eq.~\eqref{eq:lrnn-spatial} has been used previously for semantic segmentation~\cite{ChenBPMY15} and for learning low-level vision filters~\cite{liu2016learningRecursive}.
In~\cite{ChenBPMY15}, Chen~\etal explain it by domain transform, where in semantic segmentation, $p$ corresponds to the object edges.
Liu \etal~\cite{liu2016learningRecursive} explain it by arbitrary-order recursive filters, where $p$ corresponds to more general image properties (\textit{e.g.}, low-level image/color edges, missing pixels, etc.).
Both of these can be explained as the same linear propagation framework of Eq.~\eqref{eq:col-trans} with one-way connection.

{\flushleft \bf Three-way connection.}
We propose a novel three-way connection in this paper.
It enables each pixel to connect to three pixels from the previous row/column, \textit{i.e.}, the left-top, middle and bottom pixels from the previous column for the left-to-right propagation direction (see Figure.~\ref{fig:network}(b)).
With the same notations, we denote $\mathbb{N}$ as the set of these three pixels. Then the propagation for the three-way connection is:
\begin{small}
	\begin{equation}
	h_{k,t} = \left( 1-\sum_{k\in\mathbb{N}}p_{k,t}\right) x_{k,t} + \sum_{k\in\mathbb{N}}p_{k,t}h_{k,t-1}
	\label{eq:2rnn-spatial}
	\end{equation}
\end{small}
%The three way connection provide richer spatial links.
%
Equivalently, $w_t$ forms a tridiagonal matrix, with $p_{:,k},
k\in\mathbb{N}$ constitute the three non-zero elements of each
row/column.

{\flushleft \bf Relations to the affinity matrix.}
As introduced in Theorem~\ref{thero2}, the affinity matrix $A$ with linear propagation is composed of the off-diagonal elements of $G$ in Eq.~\eqref{eq:global}.
The one-way connection formulates a spares affinity matrix, since each sub-matrix of $A$ has nonzero elements only along its diagonal, and the multiplication of several individual diagonal matrics will also results in a diagonal matrix. 
On the other hand, the three-way connection, also with a sparse $w_t$, can form a relatively dense $A$ with the multiplication of several different tridiagonal matrices.
It means pixels can be densely and globally associated, by simply increasing the number of connections of each pixel during spatial propagation from one to three.
As shown in Figures~\ref{fig:network}(a) and ~\ref{fig:network}(b), the propagation of one-way connections is restricted to a single row, while the three-way connections can expand the region to a triangular 2D plane with respect to each direction.
The summarization of the four directions result in dense connections of all pixels to each other (see Figure.~\ref{fig:network}(b)).

{\flushleft \bf Stability of linear propagation.}
Model stability is of critical importance for designing linear systems. 
In the context of spatial propagation (Eq.~\ref{eq:col-trans}), it refers to restricting the responses or errors that flow in the module from going to infinity, and preventing the network from encountering 
the vanishing of gradients in the backpropagation process~\cite{highRNN}.
Specifically, the norm of the temporal Jacobian $\partial h_t\setminus\partial h_{t-1}$
should be equal to or less than one.
In our case, it is equivalent to 
regularizing each transformation matrix $w_t$ with its norm satisfying
\begin{equation}
%\left\|\dfrac{\partial h_t}{\partial h_{t-1}}\right\| = \left\| w_t \right\| \leq\lambda_{max}
\left\|\partial h_t\setminus\partial h_{t-1}\right\| = \left\| w_t \right\| \leq\lambda_{max},
\label{eq:norm}
\end{equation}
where $\lambda_{max}$ denotes the largest singularity value of $w_t$.
This condition, $\lambda_{max} \leq 1$ provides a sufficient condition for stability.
In the supplementary material, we show the requirements with respect to the elements in $w_t$.

\begin{theorem}
	Let $\left\lbrace p_{t,k}^{K}\right\rbrace_{k\in\mathbb{N} }$ be the weight in $w_t$, 
%	which can be expanded as the only three non-zero elements in the $j$-th 
%	row of the tridiagonal matrix $P_{i-1,i}$,
	%
	the model can be stabilized if $\sum_{k\in\mathbb{N}}\left| p_{t,k}^{K}\right| \leq1$. 
	%(See the supplementary material for proof)
	\label{theo3}
\end{theorem}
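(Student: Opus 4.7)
The plan is to show that the hypothesis $\sum_{k\in\mathbb{N}}|p_{t,k}^{K}|\le 1$ forces the spectral norm (largest singular value) of $w_t$ to satisfy $\|w_t\|\le 1$, which by Eq.~\eqref{eq:norm} is the sufficient condition already identified for stability. I would begin by writing $w_t$ explicitly: under the three-way connection in Eq.~\eqref{eq:2rnn-spatial}, $w_t$ is a sparse matrix with at most three non-zero entries per row, namely $\{p_{t,k}^{K}\}_{k\in\mathbb{N}}$ at columns $k\in\mathbb{N}(K)$. The hypothesis is therefore precisely the statement that the maximum absolute row sum of $w_t$ is bounded by one, i.e.\ $\|w_t\|_\infty\le 1$.

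The second step is to convert this row-sum bound into a spectral-norm bound. The natural tool is the standard inequality $\|w_t\|_2\le\sqrt{\|w_t\|_1\,\|w_t\|_\infty}$, which reduces the problem to also controlling the maximum absolute column sum $\|w_t\|_1$. Because the three-way neighborhood is itself symmetric — each pixel $k$ in the previous column feeds the three pixels $\{k-1,k,k+1\}$ of the current column — column $k$ of $w_t$ collects the weights $\{p_{t,k}^{k-1},p_{t,k}^{k},p_{t,k}^{k+1}\}$, again three entries. Under the hypothesis applied symmetrically (which is consistent with the way the affinity CNN outputs the weights), the column sums are bounded by one as well, giving $\|w_t\|_1\le 1$ and hence $\|w_t\|_2\le 1$.

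The main obstacle is exactly this row-to-column transfer: the hypothesis as stated bounds only the row sums, and for a general tridiagonal $w_t$ the column sums need not inherit that bound. To avoid assuming additional symmetry, an alternative I would pursue is a direct Gershgorin argument applied to $w_tw_t^\top$. Using the sparsity of $w_t$, each row of $w_tw_t^\top$ has at most five non-zero entries, and Cauchy--Schwarz together with $\sum_k|w_t(i,k)|^2\le(\sum_k|w_t(i,k)|)^2\le 1$ bounds the diagonal entries by one; the remaining off-diagonal Gershgorin radii can be controlled by the same row-sum hypothesis, placing every eigenvalue of $w_tw_t^\top$ inside the unit disc and thereby giving $\lambda_{max}(w_t)\le 1$ directly.

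Once $\|w_t\|\le 1$ is in hand, the temporal Jacobian $\partial h_t/\partial h_{t-1}=w_t$ in Eq.~\eqref{eq:norm} has operator norm bounded by one, which prevents the responses from exploding in the forward pass and the gradients from exploding in the backward pass, yielding the stability claim. I expect the Gershgorin route to be the cleanest and the one carried out in the supplementary material, since it avoids imposing an extra symmetry assumption on the per-column weights produced by the deep CNN.
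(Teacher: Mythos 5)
Your diagnosis of the subtlety is sharp --- the hypothesis only bounds the row sums of $w_t$, i.e.\ $\|w_t\|_\infty\le 1$, and that alone does not bound the largest singular value --- but the repair you propose does not close the gap. Applying Gershgorin to $w_tw_t^\top$ buys you nothing beyond the inequality $\|w_t\|_2^2\le\|w_t\|_1\|w_t\|_\infty$: the off-diagonal Gershgorin radius of row $i$ of $w_tw_t^\top$ is $\sum_{j\ne i}\bigl|\sum_k w_t(i,k)w_t(j,k)\bigr|\le\sum_k |w_t(i,k)|\sum_{j\ne i}|w_t(j,k)|$, which is controlled by \emph{column} sums of $w_t$ --- exactly the quantity you were trying to avoid assuming. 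In fact no proof of $\|w_t\|_2\le 1$ from the row-sum hypothesis alone can succeed: the tridiagonal matrix with $w_t(1,2)=w_t(2,2)=w_t(3,2)=1$ and all other entries zero satisfies the hypothesis (each row sum is $1$) yet has largest singular value $\sqrt{3}$. So the second and third paragraphs of your plan, as written, would fail.

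The paper's own proof aims at a weaker target. It applies Gershgorin directly to $w_t$ rather than to $w_tw_t^\top$: every eigenvalue satisfies $|\lambda-p_{t,t}|\le\sum_{k\ne t}|p_{k,t}|$, hence $|\lambda|\le\sum_{k}|p_{k,t}|\le 1$, i.e.\ the \emph{spectral radius} of $w_t$ is at most one, and this is taken to satisfy the stability condition. (Whether a spectral-radius bound genuinely controls the operator norm of the temporal Jacobian is a looseness in the paper itself, since the two coincide only for normal matrices and the counterexample above has spectral radius $1$ but norm $\sqrt{3}$; but that is the argument the paper makes.) To salvage your route you would need either to add the column-sum bound as an explicit extra assumption --- which your first alternative already requires --- or to retreat to the spectral-radius statement as the paper does.
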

% HERE
\begin{proof}
	
	See supplementary material.
	Let $\lambda$ be the eigenvalue of matrix $w_t$ and $\lambda_{max}$ be the largest one.
	According to Gershgorin\textprime s Theorem~\cite{Ger31}, where every eigenvalue of a square matrix $w_t$ satisfies:
	\begin{equation}
	\left| \lambda-p_{t,t}\right| \leq\sum_{k=1,k\neq t}^{n}\left| p_{k,t}\right| , \quad t\in \left[ 1,n\right]  
	\label{eq:GCT}
	\end{equation}
	then $\left| \lambda-p_{t,t}\right| + \left| p_{t,t}\right| \leq \sum_{k=1}^{n}\left| p_{k,t}\right| $.
	According to the triangle inequality, and since $\sum_{k=1,t\neq k}^{n}\left| p_{k,t}\right| \leq 1$, we have
	\begin{equation}
	\lambda_{max} \leq \left| \lambda-p_{t,t}\right| + \left| p_{t,t}\right| \leq \sum_{k=1}^{n}\left| p_{k,t}\right|\leq1
	\label{eq:prove1}
	\end{equation}
	which satisfies the model stability condition.
\end{proof}
Theorem~\ref{theo3} shows that the stability of a linear propagation model can be maintained by regularizing the all weights of each pixel in the hidden layer $H$, with the summation of their absolute values less than one. 
% 1D and 2D
For the one-way connection, Chen~\etal~\cite{ChenBPMY15} limited each scalar output $p$ to be within $\left(0,1\right) $. Liu~\etal~\cite{liu2016learningRecursive} extended the range to $\left(-1,1\right) $, where the negative weights showed preferable effects for learning image enhancers.
It indicates that the affinity matrix is not necessarily restricted to be positive/semi-positive definite (\textit{e.g.}, the setting is also applied in~\cite{levin2008closed}.)
For the three-way connection, we simply regularize the three weights (the output of a deep CNN) according to Theorem~\ref{theo3} without restriction to be any positive/semi-positive definite.

\begin{figure}[t]
	\centering
	{
		\includegraphics[width = .85\linewidth]{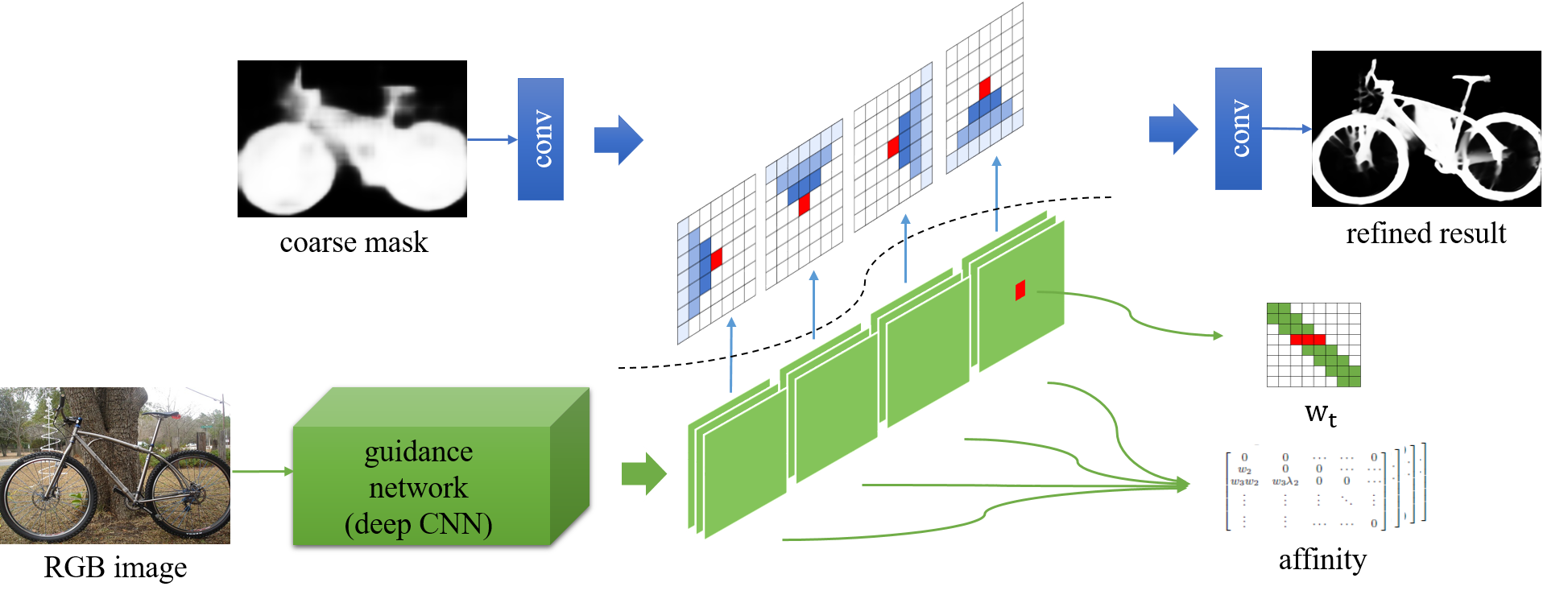} \vspace{-3mm}
	}
	\caption{
		\footnotesize 
	%	General architecture of SPN with the combination of a guidance network and a spatial propagation module.
		We illustrate the general architecture of the SPN using a three-way connection for segmentation refinement. The network, divided by the black dash line, contains a propagation module (upper) and a guidance network (lower). 
		The guidance network outputs all entities that can constitute four affinity matrices, where each sub-matrix $w_t$ is a tridiagonal matrix.
		The propagation module, being guided by the affinity matrices, deforms the input mask to a desired shape.
		All modules are differentiable and jointly learned via SGD.   
	}
	\label{fig:network}\vspace{-3mm}
\end{figure}

\section{Implementation}
We describe the implementation of the three-way connection-based network.
We specify two separate branches: (a) a deep CNN, namely the guidance network that outputs all elements of the transformation matrix, and (b) a linear propagation module that outputs the propagation result (see Figure~\ref{fig:network}). 
The structure of a guidance network can be any regular deep
CNN, which is designed for the task at hand. Examples of this network
are described in Section~\ref{sec:exp}.
The propagation module receives an input map and output a refined or transformed result. 
It also takes the weights learned by the deep CNN guidance network as the second input.

The guidance network takes, as input, any 2D matrix that can help with learning the affinity matrix (\textit{e.g.}, typically an RGB image). 
It outputs all the weights that constitute the transformation matrix $w_t$. 
The linear propagation module takes, as inputs, a 2D map that needs to be propagated (\textit{e.g.}, a coarse segmentation mask), and the weights generated by the guidance network.
Suppose that we have a map of size $n\times n\times c$ that is input into the propagation module, the guidance network needs to output a weight map with the dimensions of $n\times n\times c\times \left( 3\times 4\right) $,
\textit{i.e.}, each pixel in the input map is paired with $3$ scalar weights per direction, and $4$ directions in total.
The propagation module contains $4$ independent hidden layers for the different directions, where each layer combines the input map with its respective weight map using Eq. \eqref{eq:2rnn-spatial}.
All submodules are differentiable and jointly trained using stochastic gradient descent (SGD).
We use node-wise max-pooling, similarly to~\cite{liu2016learningRecursive}, to integrate the hidden layers and to obtain the final propagation result.
%
%We implement the propagation module as an independent layer in caffe~\cite{jia2014caffe}, setting the base learning rate to $10^{-4}$, and training the HELEN and VOC segmentation tasks with about $40$ and $100$ epochs, respectively. 

%
\section{Experimental Results}
\label{sec:exp}
%
%We validate the SPN on the task of refinement of image segmentation masks.
The SPN can be trained jointly with any segmentation model by being inserted on top of the last layer that outputs probability maps, or trained separately as a segmentation refinement model.
In this paper we choose the second option.
Given a coarse image segmentation mask as the input to the spatial propagation module, we show that the SPN can produce higher-quality masks with significantly refined details at object boundaries (see Figure~\ref{fig:network}).
Many models~\cite{long2015fully,ChenPK0Y16} generate low-resolution segmentation masks with coarse boundary shapes to seek a balance between computational efficiency and semantic accuracy.
In specific, producing an original high-resolution segmentation mask usually requires the network to neither reduce the size of the input nor that of the output.
In such settings, configuring a network with both sufficient capacity and a global receptive field is usually impractical due to the huge model size.
Such problem is often solved in some sacrifice of the output resolution. 
%On the other hand, keeping a reasonable model capacity while maintaining semantic accuracy, which requires a large receptive field usually results in some sacrifice of the output resolution. 
%%\SG{A reviewer might argue that there are also dilation filters used for semantic segmentation that don't increase the computational cost, but increase the receptive field.}
%
The majority of work~\cite{long2015fully,ChenPK0Y16,crfasrnn_iccv2015} choose to firstly produce an output probability map with $8\times$ smaller resolution, and then refine the result using either post-processing~\cite{ChenPK0Y16} or jointly trained modules~\cite{crfasrnn_iccv2015}.
It is a non-trivial task for producing high-quality segmentation results.
In this work, we train only one SPN model for each task, and treat it as an universal refinement tool for different public available segmentation models.

%We use the SPN to refine the coarse segmentation masks for two slightly different tasks: (a) producing high-resolution segmentation masks for face parsing on the HELEN dataset~\cite{smith2013exemplar}; and (b) further refining the high-resolution image segmentation masks produced by pre-trained generic object segmentation models (\textit{e.g.}, VGG and ResNet-based models~\cite{long2015fully,ChenPK0Y16}).
%%
%For the HELEN dataset, we use low-resolution RGB face images to train a baseline parser, which successfully captures the image's global semantic information. The SPN is then training to refine the coarse segmentation mask to produce a high-resolution output segmentation mask.
%%
%For Pascal VOC, we train our SPN to refine the high-resolution segmentation masks produced by FCN-8s~\cite{long2015fully}, and directly generalized it to several other pre-trained image segmentation models.
%
We carry out the refinement of segmentation on two tasks: (a) generating high-resolution segmentation results on the HELEN face parsing dataset~\cite{smith2013exemplar}; and (b) refining generic object segmentation on top of a pretrained model (e.g., VGG and ResNet based models~\cite{long2015fully,ChenPK0Y16}.
For the HELEN dataset, we directly use low-resolution RGB face images to train a baseline parser, which successfully catches the global semantic information. 
The SPN is then trained on top of the coarse segmentation to generate high-resolution output.
For the Pascal VOC dataset, we train the SPN on top of the coarse segmentation results generated by the FCN-8s~\cite{long2015fully}, and directly generalize 
it to any other pretrained model.
We implement the network with a modified CAFFE~\cite{jia2014caffe}. The SPN is parallelized during propagating each row-column to the next one with CUDA. We used SGD optimizer, and set the base learning rate to 0.0001. In general, we train the HELEN and VOC segmentation tasks for about $40$ and $100$ epochs, respectively. The inference time (we do not use cuDNN) of SPN on HELEN and Pascal VOC is about 7ms and 84ms for an image of $512\times512$ resolution, respectively.
In comparison, the dense CRF costs about 1s~\cite{krahenbuhl2011efficient}, 3.2s~\cite{ChenPK0Y16} and 4.4s~\cite{crfasrnn_iccv2015} with different versions of publicly available implementations (CPU only).
We note that the majority time is spend on the guidance network, which can be accelerated by utilizing various existing network compressing strategies, applying smaller models, or sharing weights with the segmentation model if they are jointly trained.
During inference, a single $64\times64\times32$ SPN hidden layer takes 1.3ms with the same computational settings.  

{\flushleft \bf General network settings.}
For the HELEN dataset, we train the SPN with smaller patches cropped from the original high-resolution images, their corresponding coarse segmentation maps produced by our baseline parser, and with the corresponding high-resolution ground-truth segmentation masks for supervision.
All coarse segmentation maps are obtained by applying the baseline (for HELEN) or pre-trained (for Pascal VOC) image segmentation CNNs on their standard training splits~\cite{everingham2015pascal,ChenPK0Y16}.
Since the baseline HELEN parser produces low-resolution segmentation results, we upsample them using a bi-linear filter to be of the same size as the desired higher output resolution.
For the Pascal VOC dataset, we use the original output image segmentation probability maps produced by the pre-trained CNN models as its input. 
These CNN models contain upsampling layers, that typically upsample the internal feature representations by $8\times$ (\textit{e.g.}, in~\cite{long2015fully,ChenPK0Y16}) and produce output segmentation masks that are of the same size as that of the input images.
We set the SPN as a patch refinement model on top of the coarse map with basic semantic information. 
We fix the size of our input patches to $128\times 128$, use the \textit{softmax} loss, and use the SGD solver for all the experiments.
During training, the patches are sampled from image regions that contain more than one ground-truth segmentation label (\textit{e.g.}, a patch with all pixels labeled as ``background'' will not be sampled).
During testing for the VOC dataset, we restrict the classes in the refined results to be contained within the corresponding coarse input.

We combine the guidance network and the spatial propagation module similarly to~\cite{liu2016learningRecursive}.
We use two propagation units (\textit{e.g.}, the bottom part in Figure.~\ref{fig:network} is one propagation unit) with cascaded connections to achieve better results.
Differently, we feed in the integrated hidden map of the first unit to the second unit, instead of cascading each direction separately and integrate them at the end of the second unit.
%
%This is because the three-way connection cannot be directly interpreted by the high-order recursive filter as the one-way connection in~\cite{liu2016learningRecursive}.
%
We use two more convolutional layers with $32$ channels before and after the propagation units to transfer the input map to an intermediate feature map, to make it compatible with the node-wise max-pooling.
In addition, we maintain a smaller size of the propagation layer to make the model more efficient w.r.t computational speed and memory.
This is carried out by bi-linearly downsampling/upsampling after the two convolutional layers, so that the hidden maps of propagation module is with a smaller dimension of $64\times 64$.
Note that to compare the one-way with the three-way connection, we use exactly the same structure except the propagation units. We do not apply any configuration used by~\cite{ChenBPMY15} or~\cite{liu2016learningRecursive}.

{\flushleft \bf HELEN Dataset.}
The HELEN dataset provides high-resolution photography-style face images ($2330$ in total), with high-quality manually labeled facial components including eyes, eyebrows, nose, lips, and jawline,
which makes the high-resolution segmentation tasks applicable.
All prevIoUs work utilize low-resolution parsing output as their final results for evaluation.
Although many~\cite{smith2013exemplar,Yamashita2015,Liu_2015_CVPR} achieve preferable performance, their results cannot be directly adopted by high-quality facial image editing applications.
We use the setting that splits 100 samples for test following ~\cite{Yamashita2015,Liu_2015_CVPR}. 
We still take the hair region as one category, but do not evaluate it for fair comparisons with the state-of-the work~\cite{Liu_2015_CVPR}.
We use similarity transformation according to the results of 5-keypoint detection~\cite{zhang2014facial} to align all face images to the center. Keeping the original resolution, we then crop or pad them to the size of $1024\times 1024$.
%\SG{Consider moving this section before the General network settings section.}

We first train a baseline CNN with a symmetric downsample/upsample structure.
The input image is $8\times$ downsampled from the original version.
The downsampling part of the network is equipped with five consecutive conv+relu+max-pooling (with stride of 2) layers.
Starting from $32$, each one has double the number of channels, resulting in a $4\times 4\times 512$ feature maps at the bottleneck.
In order to use the information at different levels of image resolution, we add skipped-links by summing features maps of the same dimensions from the corresponding upsample and dowsample layers.
The upsample part has symmetric configurations, except that the max-pooling is replaced with bilinear upsampling, and the last sub-module has 11 channels for the 11 classes.
We apply the multi-objective loss as~\cite{Liu_2015_CVPR} to improve the accuracy along the boundaries.
We note that the symmetric structure is powerful, since the results we obtained for the baseline CNN are comparable (see Table.~\ref{tab:helen}) to that of~\cite{Liu_2015_CVPR}, who apply a much larger model (38 MB vs. 12 MB) in comparison.
We then train a SPN on top of the baseline CNN results, with patches of input RGB image and coarse segmentations masks sampled from the preprocessed high-resolution image.
For the guidance network, we use the same structure as that of the baseline segmentation network, except that its upsampling part ends at a resolution of $64\times 64$, and its output layer has  $32\times12=384$ channels.
In addition, we train another face parsing CNN with $1024\times 1024$ sized inputs and outputs (CNN-Highres) for better comparison. It has three more sub-modules at each end of the baseline network, where all are configured with $16$ channels to process higher resolution images.
%\SG{It would be best to add a figure to explain the configuration of this network.}

We show quantitative and qualitative results in Table.~\ref{tab:helen} and~\ref{fig:helen} respectively.
We compared the one/three way connection SPNs with the baseline, the CNN-Highres and the most relevant state-of-the-art technique for face parsing~\cite{Liu_2015_CVPR}. Note that the results of baseline and \cite{Liu_2015_CVPR}\footnote{The original output (also for evaluation) size it $250*250$.} are bi-linearly upsampled to $1024\times 1024$ before evaluation.
Overall, both SPNs outperform the other techniques with a significant margin of over 6 intersection-over-union (IoU) points, especially for the smaller facial components (\textit{e.g.}, eyes and lips) where with smaller resolution images, the segmentation network performs poorly.
We note that the one-way connection-based SPN is quite successful on relatively simple tasks such as the HELEN dataset, but fails for more complex tasks, as revealed by the results of Pascal VOC dataset in the following section.
%likely to generate line-style artifacts, especially in the regions with more complex structures (\textit{e.g.}, the hair in Figure~\ref{fig:helen}(e)).
%%
%We observe a similar trend in terms of the accuracy of the SPN for more complex tasks, as revealed by the results of Pascal VOC dataset in the following section.

\begin{table*}[t]
	\centering
	\caption{
		\footnotesize Quantitative evaluation results on the HELEN dataset. We denote the upper and lower lips as ``U-lip'' and ``L-lip'', and overall mouth part as ``mouth'', respectively. The label definitions follow~\cite{Liu_2015_CVPR}.}
	\footnotesize
	\begin{tabular}{c|*{8}{c}r}
		{ Method }             & skin & brows & eyes & nose & mouth  & U-lip & L-lip &  in-mouth & overall \\
		\hline
		{ Liu~\etal~\cite{Liu_2015_CVPR}}        & 90.87  & 69.89 & 74.74 & 90.23 & 82.07 & 59.22 & 66.30 & 81.70 & 83.68  \\
		\hline
		{baseline-CNN}           & 90.53 & 70.09 & 74.86 & 89.16 & 83.83 & 55.61 & 64.88 & 71.72 & 82.89  \\
		{Highres-CNN}    & {91.78} & {71.84} & {74.46} & {89.42} &  {81.83} & {68.15} &  {72.00} & {71.95} & {83.21}  \\
		{SPN (one-way)}    & {92.26} & {75.05} & {85.44} & {91.51} &  {88.13} & {77.61} &  {70.81} & {79.95} & {87.09}  \\
		{SPN (three-way)}    & \bf{93.10} & \bf{78.53} & \bf{87.71} & \bf{92.62} &  \bf{91.08} & \bf{80.17} &  \bf{71.63} & \bf{83.13} & \bf{89.30}\\\hline
	\end{tabular}
	\label{tab:helen}
\end{table*}

\begin{figure}
\includegraphics[width=1\linewidth]{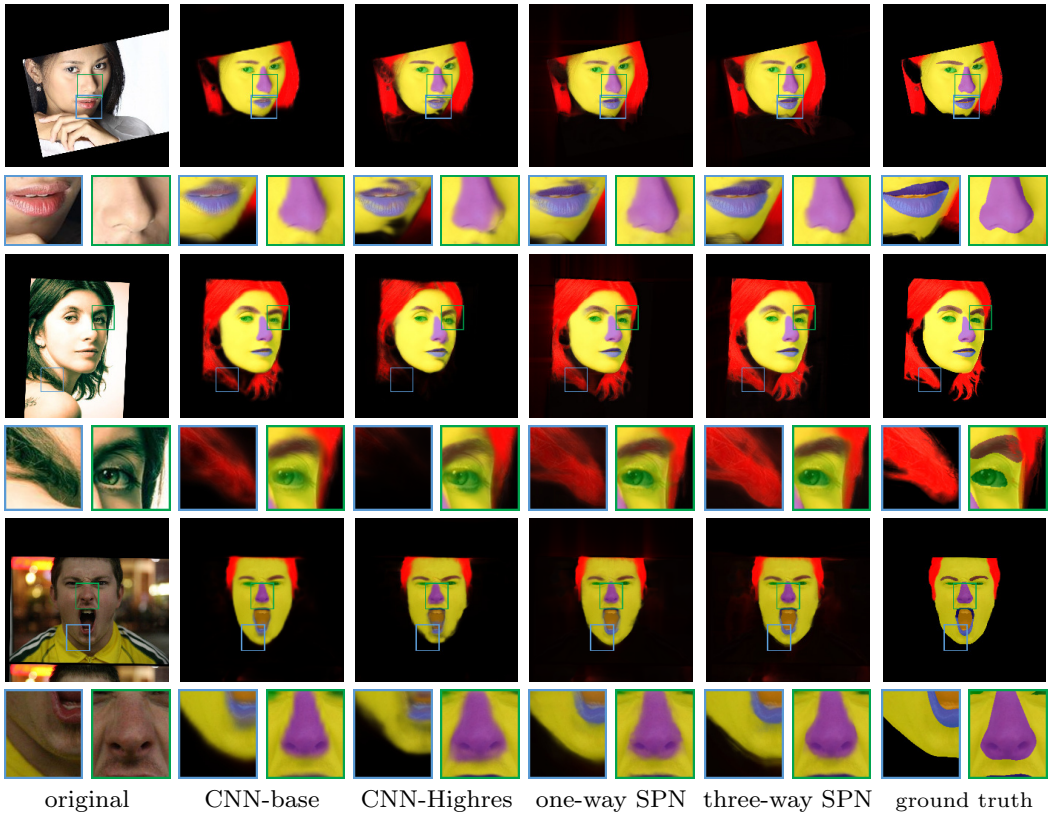}\vspace{-3mm}
	\caption{
		\footnotesize
		Results of face parsing on the HELEN dataset with detailed regions cropped from the high resolution images. (Images are all with high resolution and can be viewed by zoom-in.)
	}
		\label{fig:helen}\vspace{-3mm}
\end{figure}

{\flushleft \bf Pascal VOC Dataset.}
The PASCAL VOC 2012 segmentation benchmark~\cite{everingham2015pascal} involves 20 foreground object classes and one background
class. The original dataset contains $1464$ training, $1499$ validation and $1456$ testing images, with pixel-level annotations. 
The performance is mainly measured in terms of pixel IoU averaged across the 21 classes.
We train our SPNs on the train split with the coarse segmentation results produced by the FCN-8s model~\cite{long2015fully}.
The model is fine-tuned on a pre-trained VGG-16 network, where different levels of features are upsampled and concatenated to obtain the final, low-resolution segmentation results ($8\times$ smaller than the original image size).
The guidance network of the SPN also fine-tunes the VGG-16 structure from the beginning till the \textit{pool5} layer as the downsampling part.
Similar to the settings for the HELEN dataset, the upsampling part has a symmetric structure with skipped links until the feature dimensions of $64\times64$.
The spatial propagation module has the same configuration as that of the SPN that we employed for the HELEN dataset.
The model is applied on the coarse segmentation maps of the validation and test splits generated by any image segmentation algorithm without fine-tuning.
We test the refinement SPN on three base models: (a) FCN-8s~\cite{long2015fully}, (b) the atrous spatial pyramid pooling (ASPP-L) network fine-tuned with VGG-16, denoted as Deeplab VGG, and (c) the ASPP-L: a multi-scale network fine-tuned with ResNet-101~\cite{HeZRS15} (pre-trained on the COCO dataset), denoted as Deeplab ResNet-101.
Among them, (b) and (c) are the two basic models from~\cite{ChenPK0Y16}, which are then refined with dense CRF~\cite{krahenbuhl2011efficient} conditioned on the original image.
\begin{figure}[t]
	\centering
	{
		\includegraphics[width = .99\linewidth]{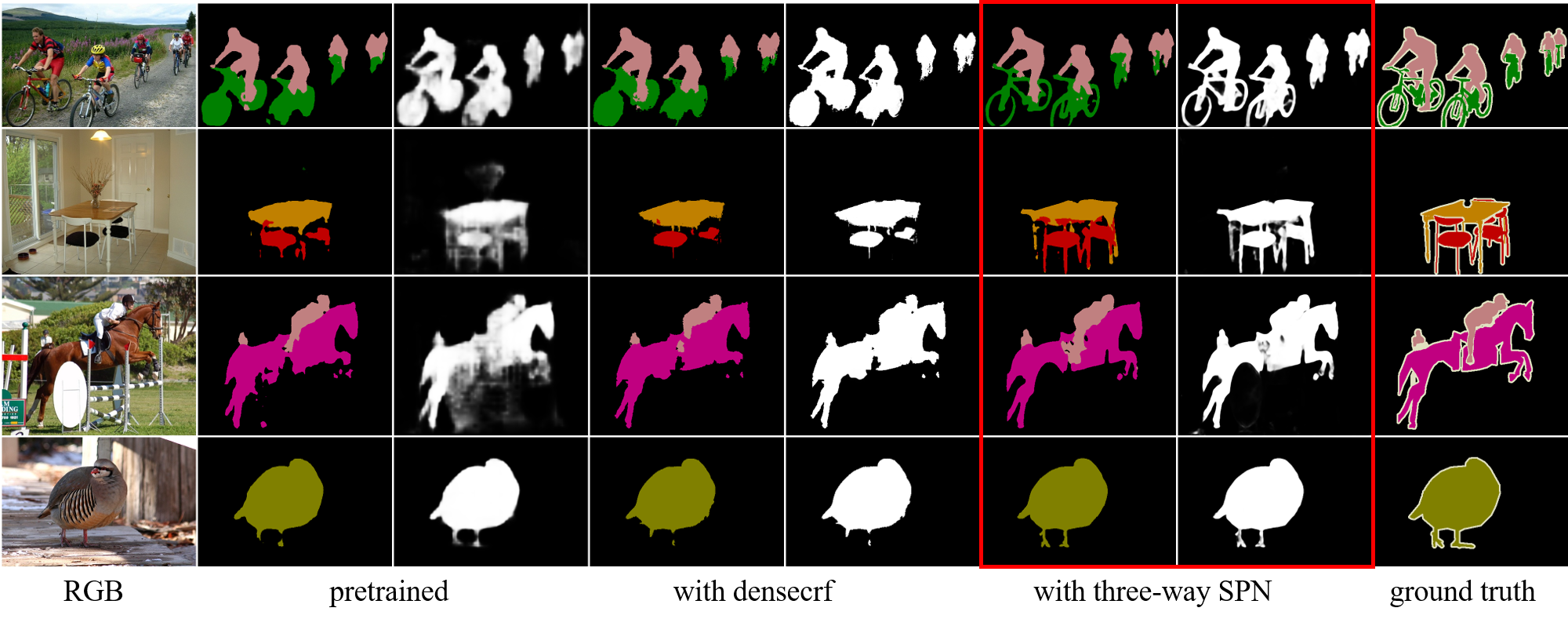} \\
	}
	\caption{
		\footnotesize 
		Visualization of Pascal VOC segmentation results (left) and
        object probability (by $1-\it{P}_{b}$, $P_b$ is the probability of background).
        The ``pretrained'' denotes the base Deeplab ResNet-101 model, while the rest 4 columns show the base model combined with the dense CRF~\cite{ChenPK0Y16} and the proposed SPN, respectively.    
	}
	\label{fig:voc}
\end{figure}

\begin{table*}
	\centering
    \vspace{-1em}
	\caption{
		\footnotesize Quantitative evaluation results on the Pascal VOC dataset. We compare the two connections of SPN with the corresponding pre-trained models, including: (a) FCN-8s (F), (b) Deeplab VGG (V) and (c) Deeplab ResNet-101 (R). AC denotes accuracy, ``+'' denote added on top of the base model.}
	\footnotesize
	\begin{tabular}{c|c c c|c c c| c c c r}
		{ Model }             & F & +1 way & +3 way & V & +1 way  & +3 way & R & +1 way  & +3 way \\
		\hline
		{ overall AC}        & 91.22  & 90.64 & \bf{92.90} & 92.61 & 92.16 & \bf{93.83} & 94.63 & 94.12 & \bf{95.49}  \\
		{mean AC}           & 77.61 & 70.64 & \bf{79.49} & 80.97 & 73.53 & \bf{83.15} & 84.16 & 77.46 & \bf{86.09}  \\
		\bf{mean IoU}    & {65.51} & {60.95} & \bf{69.86} & {68.97} &  {64.42} & \bf{73.12} &  {76.46} & {72.02} & \bf{79.76} \\\hline
	\end{tabular}
	\label{tab:voc}
    \vspace{-1em}
\end{table*}

\begin{table*}
	\centering
	\vspace{-1em}
	\caption{
		\footnotesize Quantitative evaluation results on the Pascal VOC dataset. We refine the base models proposed with dilated convolutions~\cite{yu2015multi}. ``+'' denote added on top of the ``Front end'' model.}
	\footnotesize
	\begin{tabular}{c|c c || c c r}
		{ Model }             & Front end & +3 way & +Context & +Context+3 way  \\
		\hline
		{ overall AC}         & 93.03 & 93.89 & 93.44 & \textbf{94.35}  \\
		{mean AC}             & 80.31 & 83.47 & 80.97 & \textbf{83.98} \\
		\bf{mean IoU}         & 69.75 & 73.14 & 71.86 & \textbf{75.28} \\\hline
	\end{tabular}
	\label{tab:frontal-end}
	\vspace{-1em}
\end{table*}

Table~\ref{tab:voc} shows that through the three-way SPN, the accuarcy of segmentation is significantly improved over the coarse segmentation results for all the three baseline models. 
\begin{wraptable}{r}{6.0cm}
	\centering
	\caption{
		\footnotesize Quantitative comparison (mean IoU) with dense CRF-based refinement~\cite{ChenPK0Y16} on Deeplab pre-trained models. }
	\footnotesize
	\begin{tabular}{c|c c c r}
		{ mIoU }             & CNN & +dense CRF & +SPN & \\
		\hline
		{VGG}        & 68.97  & 71.57 & \bf{73.12} & \\
		{ResNet}           & 76.40 & 77.69 & \bf{79.76} & \\\hline
	\end{tabular}
	\label{tab:voc2}
\end{wraptable}
It has strong capability of generalization and can successfully refine any coarse maps from different pre-trained models by a large margin.
Different with the Helen dataset, the one-way SPN fails to refine the segmentation, which is probably due to its limited capability of learning preferable affinity with a sparse form, especially when the data distribution gets more complex.
Table~\ref{tab:voc2} shows that by replacing the dense CRF module with the same refinement model, the performance is boosted by a large margin, without fine-tune.
One the test split, the DeepNet ResNet-101 based SPN achieves the mean IoU of $\mathbf{80.22}$, while the dense CRF gets $79.7$.
The three-way SPN produces fine visual results, as shown in the red bounding box of Figure~\ref{fig:voc}.
By comparing the probability maps (column 3 versus 7), SPN exhibits fundamental improvement in object details, boundaries, and semantic integrity.

In addition, we show in table~\ref{tab:frontal-end} that the same refinement model can also be generalize to dilated convolution based networks~\cite{yu2015multi}. It significantly improves the quantitative performance on top of the ``Front end'' base model, as well as adding a multi-scale refinement module, denoted as ``+Context''.
Specifically, the SPN improves the base model with much larger margin compared to the context aggregation module (see ``+3 way'' vs ``+Context'' in table~\ref{tab:frontal-end}).
\begin{figure}[!t]
	\centering
	{
		\includegraphics[width = .99\linewidth]{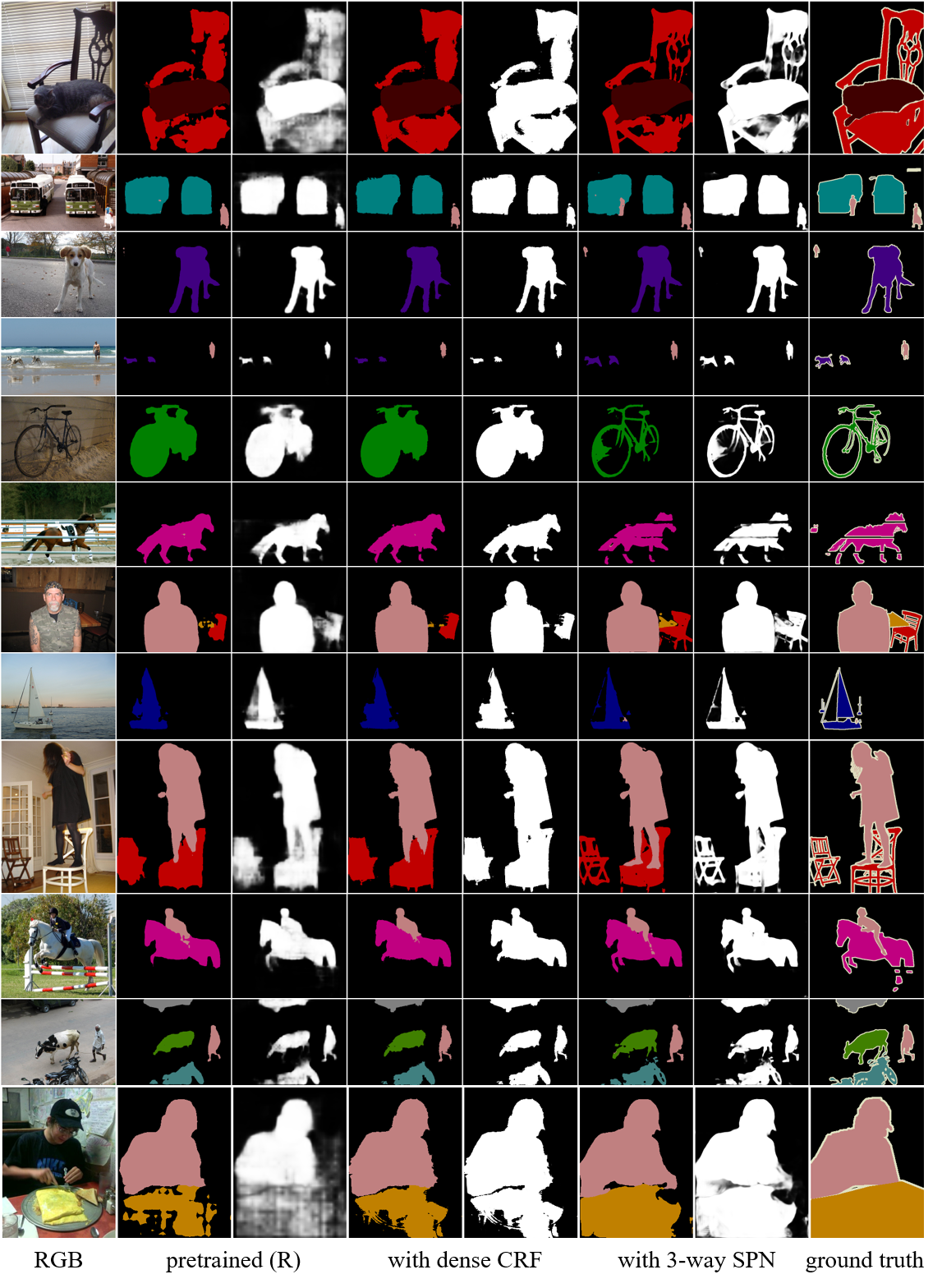} \\
	}
	\caption{
		\footnotesize 
		Based on model R, we visualization of Pascal VOC segmentation results (left) and object probability (by $1-\it{P}_{b}$, where $\it{P}_{b}$ denotes the probability of the background region). 
	}
	\label{fig:voc_supp}
\end{figure}

\section{Conclusion}
We propose spatial propagation networks for learning the affinity matrix 
for vision tasks. 
The spatial propagation network is a generic framework that 
can be applied to numerous tasks,
and in this work we demonstrate the effectiveness in semantic ßobject segmentation. 
Experiments on the HELEN face parsing and PASCAL VOC object semantic segmentation tasks show that the spatial propagation network is general, effective and efficient 
for generating high-quality segmentation results.

\clearpage
\small
\bibliographystyle{plain}
%\bibliography{rnn}

\end{document}